
\typeout{IJCAI--25 Instructions for Authors}


\documentclass{article}
\pdfpagewidth=8.5in
\pdfpageheight=11in

\usepackage{ijcai25}

\usepackage{times}
\usepackage{soul}
\usepackage{url}
\usepackage[utf8]{inputenc}
\usepackage[small]{caption}
\usepackage{booktabs}
\usepackage[switch]{lineno}

\usepackage{amsmath,amssymb,amsfonts}
\usepackage{algorithmic}
\usepackage{graphicx}
\usepackage{textcomp}
\usepackage{xcolor}
\usepackage{multirow}
\usepackage{algorithm}
\usepackage{amsthm,amsmath,amssymb}
\usepackage{mathrsfs}
\usepackage{hyperref}
\usepackage{booktabs} 
\usepackage{siunitx}  
\usepackage{subfigure}

\def\BibTeX{{\rm B\kern-.05em{\sc i\kern-.025em b}\kern-.08em
    T\kern-.1667em\lower.7ex\hbox{E}\kern-.125emX}}
    
\hypersetup{colorlinks = true, 
	    linkcolor = black, 
	    urlcolor = blue,
           citecolor = blue} 


\urlstyle{same}







\pdfinfo{
/TemplateVersion (IJCAI.2025.0)
}

\title{Stabilizing Quantization-Aware Training by Implicit-Regularization on Hessian Matrix}

\author{
Junbiao Pang$^1$
\and
Tianyang Cai$^1$\\\
\affiliations
$^1$Beijing University Of Technology\\
\emails
junbiao\_pang@bjut.edu.cn,
tianyang\_cai@163.com
}

\begin{document}

\maketitle

\begin{abstract}

Quantization-Aware Training (QAT) is one of the prevailing neural network compression solutions. However, its stability has been challenged for yielding deteriorating performances as the quantization error is inevitable. We find that the sharp landscape of loss, which leads to a dramatic performance drop, is an essential factor that causes instability. Theoretically, we have discovered that the perturbations in the feature would bring a flat local minima. However, simply adding perturbations into either weight or feature empirically deteriorates the performance of the Full Precision (FP) model. In this paper, we propose Feature-Perturbed Quantization (FPQ) to stochastically perturb the feature and employ the feature distillation method to the quantized model. Our method generalizes well to different network architectures and various QAT methods. Furthermore, we mathematically show that FPQ implicitly regularizes the Hessian norm, which calibrates the smoothness of a loss landscape. Extensive experiments demonstrate that our approach significantly outperforms the current State-Of-The-Art (SOTA) QAT methods and even the FP counterparts.

\end{abstract}

\section{Introduction}
\label{sec:intro}
Model compression has become an essential requirement for integrating deep models into edge computing devices. The prevalent methods in the domain of model compression include the search for optimal neural architectures~\cite{zoph-nas-arxiv-2016}, network pruning~\cite{han2-purn-arxiv-2015}, and the Deep Neural Network (DNN) quantization~\cite{li-brecq-arxiv-2021}~\cite{esser-lsq-arxiv-2019}. DNN quantization are categorized into two sub-classes: Post-Training Quantization (PTQ)~\cite{nagel-adaround-icml-2020},~\cite{li-brecq-arxiv-2021},~\cite{wei-qdrop-arxiv-2022},~\cite{li-ptqvit-tnnls-2023} and Quantization-Aware Training (QAT)~\cite{esser-lsq-arxiv-2019},~\cite{nagel-ooq-icml-2022}. PTQ adjusts the quantized model with a limited calibration dataset, bypassing the need for retraining. However, when dealing with low-bit widths, \textit{e.g.}, 2 or 4 bits, PTQ may face a significant drop in performance. Conversely, QAT re-trains or fine-tunes the neural network to maintain its accuracy especially for an extremely low-bit width model.

\begin{figure}[t!]
    \centering
    \includegraphics[width=1.0\linewidth]{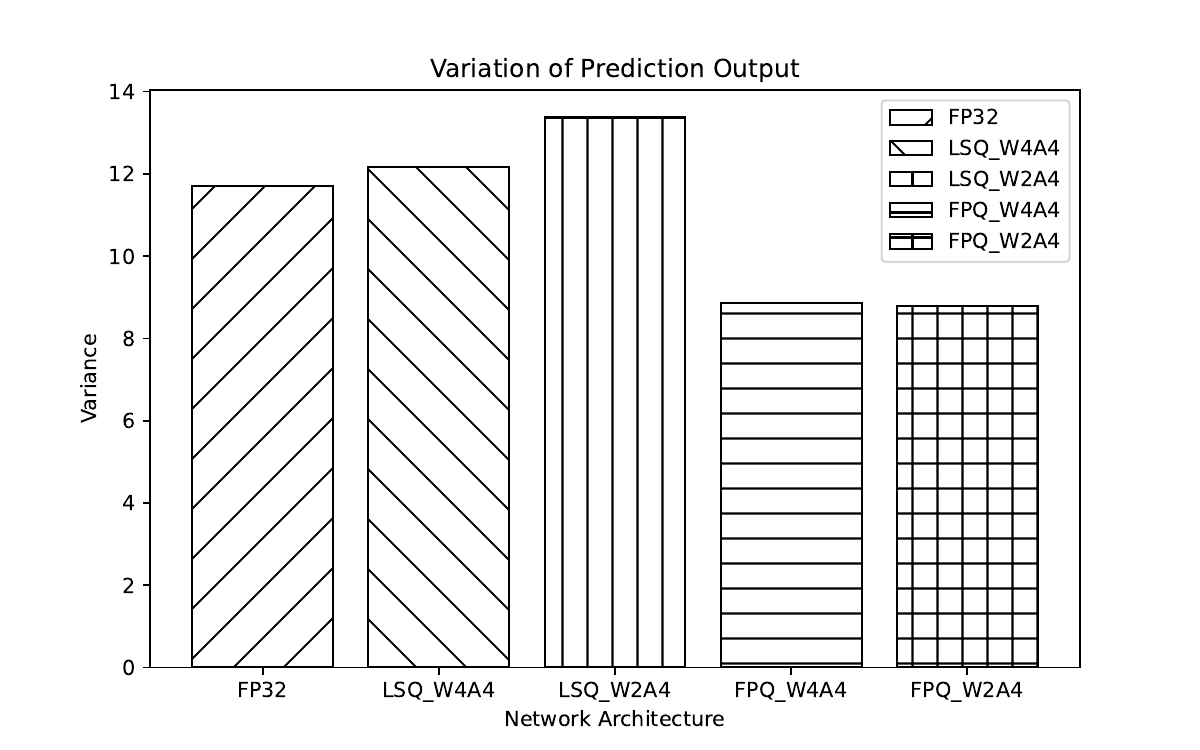}
    \caption{The output variances of FP32, LSQ W4A4, LSQ W2A4, FPQ W4A4, FPQ W2A4 models of the ResNet18 on CIFAR-10 dataset.}
    \label{fig:evidence}
\end{figure}

Despite QAT tries to preserve the performance of the low bit model by re-training (or fine-tuning) the weights, the stability of a QAT model
(\textit{i.e.}, the empirical performance is sensitive to the noise from the input for the quantizated model with the different bit widths) has been challenged. Importantly, the stability of a QAT model is considered to be the empirical way to calibrate the Lipschitz constant of a network~\cite{lin-dq-arxiv-2019}. That is, if the quantization error occurs, how stable is the output of a neural network? 

We have applied Gaussian-distribution perturbations to the original inputs to investigate the stability of the quantized DNN. Fig.~\ref{fig:evidence} illustrates that as the model parameters are optimized, the stability of FP and the SOTA QAT method (LSQ~\cite{esser-lsq-arxiv-2019}) deteriorate. Such instability makes QAT converge to the sharp loss landscape which leads to the significant performance drops when the model is quantized. 
This phenomenon discovers that the quantized model is highly sensitive to small perturbations from the input~\cite{lin-dq-arxiv-2019} or the quantization error either from the weights or the activations~\cite{nagel-datafreequantization-cvpr-2019}. For instance, BENN~\cite{zhu-benn-cvpr-2019} ensembled multiple binary models to alleviate this problem.
However, the inherent instability starts from the QAT training scheme and the ensambling approach is a compromise with the multiple quantized models for a prediction. 

The important sources of such instability come from three dimensions: 1) Straight-Through Estimator (STE)-based QAT has a bias gradient~\cite{shin-nipq-cvpr-2023}; 2) the quantization error would be propagated into the next layer of DNN~\cite{pang-qatcr-arxiv-2024}; and 3) the FP model itself is unstable due to the improper Lipsciz constant~\cite{qian-L2-arxiv-2018}. When these factors appear together for a quantized DNN, there is often a huge performance drop. Given the above factors as $\theta_1,\theta_2,\theta_3$ receptively, the corresponding loss is defined as $L(\theta_1, \theta_2, \theta_3)$. 
Let $f_{FP}^{(l)}$ and $f_{Q}^{(l)}$ denote the features at the $l$-th layer for the FP model and the quantized model, respectively. The quantization would introduce errors into the features as follows:
\begin{equation}
\label{eqt:feature_error}
f_{\text{Q}}^{(l)} = f_{\text{FP}}^{(l)} + \Delta f^{(l)}(\theta_1, \theta_2, \theta_3),
\end{equation}
where $ \Delta f^{(l)}(\theta_1, \theta_2, \theta_3)$ represents the perturbation in the feature caused by the factors $\theta_1, \theta_2$ and $\theta_3$ at the $l$-th layer. The perturbation $\Delta f^{(l)}(\theta_1, \theta_2, \theta_3)$ would propagate through all the layers of a neural network, leading to the variance in the final output as follows: 
\begin{equation}
\label{eqt:instability_form}
    L(\theta_{1}, \theta_{2}, \theta_{3}) =  L_{\text{original}} + \Delta L(\theta_{1}, \theta_{2}, \theta_{3}),
\end{equation}
where the $L_{\text{original}}$ is from the unperturbed model, and $\Delta L(\theta_{1}, \theta_{2}, \theta_{3})$ represents the loss caused by about three factors. The perturbation in loss $\Delta L(\theta_{1}, \theta_{2}, \theta_{3})$ is the errors between the quantized features and the FP features at each layer, which is controlled by the Lipsciz constants of different layers~\cite{pang-qatcr-arxiv-2024}.
In summary, QAT lacks the mechanism to ``absorb'' the perturbations (or errors\footnote{if it doesn't cause confusion, we will abuse use these terms.}) in each layer of the network caused by the above three aspects. 

As a result, even minor perturbations would lead to a substantial drop in performance, which is demonstrated as the poor generalization ability. 
As shown in Fig.~\ref{fig:evidence}, the instability leads a quantizated model to converge to a local sharp minima~\cite{deng-ausam-arxiv-2024}. 
Concretely, the sharp minima in the landscape illustrates that the network weight is almost only applicable to the current samples. 

Based on the above analysis, we propose Feature-Perturbed Quantization (FPQ) that leverages implicit-regularization on hessian matrix to smooth the loss landscape for overcoming the instability problem. Intuitively, the perturbation makes the optimization of the weights perform well on the nearby result rather than exactly the current one. Consequently, we control perturbations of features in each layer during the training stage by encouraging the model to converge to flat minima. 
Besides, we theoretically discover the advantage of Stochastic Feature Perturbations (SFP), and why SFP should be combined with a feature distillation technique. 
Our contributions are as follows:


\begin{itemize}
\item
Mathematically,  we show that the performance drop caused by quantization error is highly related to the norm of Hessian, which
is also mentioned empirically in~\cite{deng-ausam-arxiv-2024}. Furthermore, we discover that minimizing perturbations in the feature of each layer implicitly minimizes this term. Theoretically, it is shown that adding noise to features can explicitly optimize the flatness of the model and improve the generalization ability of the model.

\item 
We propose a approach to overcome the instability of QAT by leveraging implicit Hessian regularization. Instead of simply adding perturbations to the weights, we formulate the smoothness of landscape as the minimization of stochastic perturbation in feature, defined as the expected loss within the neighborhood of current neural network parameters. We theoretically discover the advantage of Stochastic Feature Perturbations (SFP), and why SFP should be combined with a feature distillation technique which leads to the improved performance. 


\item The proposed methods consistently improve QAT-based methods and match or improve the SOTA results on various network architectures on the CIFAR-10 and CIFAR-100. Besides, extensive experiments show that our methods outperform other QAT approaches.

\end{itemize}


\section{Related Work}
\subsection{Quantization-Aware Training}

As discussed in Section~\ref{sec:intro}, the errors of the first two dimensions are related to QAT, while the third one pertains to the process of pre-training FP models, which is not the focus of this paper.

In the first dimension, the instability is attributed to biased backpropagated gradients due to the round operation. For instance, Straight-Through Estimator (STE)~\cite{bengio-ste-arxiv-2013} utilized the expected probability of stochastic quantization as the gradient value for the backpropagation. 
The EWGS~\cite{lee-ewgs-cvpr-2021} adaptively adjusted the quantized gradients of the quantization error, thereby compensating for the biased gradient. The PSG~\cite{kim-psg-nips-2020} scaled the gradients based on the position of the weight, effectively offering a form of gradient compensation. The instability of gradients also leads to the oscillation problem during the QAT learning process. The DiffQ~\cite{defossez-diffq-arxiv-2021} identified that STE would lead to weight oscillations during training. The Overcoming Oscillation Quantization (OOQ)~\cite{nagel-ooq-icml-2022} addressed the oscillation issues by encouraging the latent weights to align closely with the center of the quantization bin. The Resilient Binary Neural Network (ReBNN)~\cite{xu-rebnn-aiii-2023} introduced a weighted reconstruction loss to formulate an adaptive training objective.

In the second dimension, DQ \cite{lin-dq-arxiv-2019} controled the error in feature transmission by managing the Lipschitz constant.  NIPQ~\cite{shin-nipq-cvpr-2023} employed pseudo-quantization noise to simulate the quantization process, reducing quantization errors. Similarly, some PTQ efforts aligned with the features of the FP model through layer-wise \cite{nagel-adaround-icml-2020} and block-wise~\cite{li-brecq-arxiv-2021},~\cite{wei-qdrop-arxiv-2022} reconstruction, thereby preventing the further propagation of errors within the network. To the best of our knowledge, we are the first to apply this idea to stabilize the QAT. 

Specially, for the quantization of Stable Diffusion (SD), the phenomenon of error accumulation is more pronounced due to the multi-step inference of the denoising process. BitsFusion~\cite{sui-itsfusion-arxiv-2024b} employs a mixed-precision approach to mitigate the accumulation of errors during the denoising process. Moreover, most of the work on SD quantization \cite{li-qdiffusion-cvpr-2023}, \cite{huang-tfmq-cvpr-2024}, \cite{he-ptqd-nips-2024}, \cite{so-temporal-nips-2024} is based on BRECQ, as the block-wise reconstruction in BRECQ can minimize error propagation as much as possible. However, this work differs from their focus. Their concern is addressing the accuracy issues caused by error accumulation, while we are concerned with the instability issues arising from error propagation.

\subsection{Adversarial Robustness}

In this paper, we argue that QAT should exhibit robustness against feature perturbations. This aim aligns with the topic of adversarial robustness, which focused on mitigating the vulnerability of neural networks to the perturbations from input \cite{szegedy-intriguing-arxiv-2013}, \textit{e.g.}, random smoothing \cite{lecuyer-certified-ieee-2019},~\cite{cohen-certified-pmlr-2019}, and adversarial training \cite{goodfellow-explaining-arxiv-2014},~\cite{mkadry-towards-iclr-2017}. Adversarial training, in particular, optimized the worst-case training loss and has been shown to not only improve robustness but also enhanced performance in tasks such as image classification \cite{xie-adversarial-cvpr-2020}. To the best of our knowledge, we are the first to apply this idea to stabilize QAT.

\section{Proposed Method}

\subsection{Notation and Background}

\textbf{Basic Notations.} In this paper, $\boldsymbol{x}$ represents a matrix (or tensor), a vector is denoted as  $\boldsymbol{x}$, $f(\boldsymbol{x};\boldsymbol{w}$) represents a FP model with the weight $\boldsymbol{w}$ and the input $\boldsymbol{x}$, $f(\boldsymbol{x};\boldsymbol{w}, \boldsymbol{s}, \boldsymbol{z})$ represents a quantized model with the parameter $\boldsymbol{w}$, quantization parameter $\boldsymbol{s}$, $\boldsymbol{z}$ and the input $\boldsymbol{x}$. We assume sample $\boldsymbol{x}$ is generated from the training set $\mathscr{D}_{t}$. 

\textbf{Quantization.} Step size $\boldsymbol{s}$ and zero point $\boldsymbol{z}$ serve as a bridge between floating-point and fixed-point representations. Given the input tensor $\boldsymbol{x}$\footnote{It could either be feature $\boldsymbol{x}$ or weight $\boldsymbol{w}$.}, the quantization operation is as follows:
\begin{equation}\label{eqt:quantization}
    \begin{aligned}
    \boldsymbol{x}_{int} &= clip\left (   \lfloor{\frac{\boldsymbol{x}}{\boldsymbol{s}}}\rceil + \boldsymbol{z},0,2^{q}-1 \right ),\\
    \hat{\boldsymbol{x}} &=\left ( \boldsymbol{x}_{int}-\boldsymbol{z}  \right ) \boldsymbol{s},
    \end{aligned}
\end{equation}
where $\lfloor{\cdot  }\rceil$ represents the rounding-to-nearest operator, $q$ is the predefined quantization bit-width, $\boldsymbol{s}$ denotes the scale between two subsequent quantization levels.  $\boldsymbol{z}$ stands for the zero-points. Both $\boldsymbol{s}$ and $\boldsymbol{z}$ are initialized by a calibration set $\mathscr{D}_{c}$ from the training dataset $ \mathscr{D}_{t}$, \textit{i.e.}, $\mathscr{D}_{c}\in \mathscr{D}_{t}$.
\begin{eqnarray}
\boldsymbol{s} =\frac{\boldsymbol{x}_{max} - \boldsymbol{x}_{min}}{2^{q} - 1},\label{eqt:scale_init} \\
\boldsymbol{z} =\lfloor{q_{max}-\frac{\boldsymbol{x}_{max}}{\boldsymbol{s}}}\rceil,\label{eqt:zero_init}
\end{eqnarray}
where $q_{max}$ is the maximum value of the quantized integer.

We follow the practice in LSQ~\cite{esser-lsq-arxiv-2019}, where $\boldsymbol{s}$ is a learnable parameter. Therefore, the loss function of a quantized model is given as follows:
\begin{equation}\label{eqt:Loss_example}
 \mathop{\arg\min}_{\boldsymbol{w},\boldsymbol{s}} \ \  \mathbb{E}_{\boldsymbol{x}\sim \mathscr{D}_{t}} [L\left (\boldsymbol{x}; \boldsymbol{w}, \boldsymbol{s} \right)],
\end{equation}
where $L\left (\cdot;\cdot\right)$ is the predefined loss function. 

Notice that, the zero-points $\boldsymbol{z}$ are initialized by \eqref{eqt:zero_init} with the calibration set $\mathscr{D}_{c}$, and are fixed throughout the entire QAT training process. 
During QAT training, the weights involved in the forward propagation are actually the quantized weights $\hat{\boldsymbol{w}}$, rather than the floating-point weights $\boldsymbol w$.

\subsection{Feature-Perturbation brings Implicit Regularization on Hessian Matrix}
\label{sec:FPQ}

\subsubsection{Motivation.}

The convergence property of oscillation near the target value is greatly beneficial to enhance the robustness of the network~\cite{chen-stabilizing-icml-2020}. When some perturbation $\boldsymbol{\delta}$ exists, the objective function can be approximated via Taylor expansion as follows:
\begin{equation}\label{eqt:talor_3}
\begin{split}
&\mathbb{E}[L(\boldsymbol{w} + \boldsymbol{\delta})]\\
\approx &\mathbb{E}[L(\boldsymbol{w}) + \boldsymbol{\delta} \cdot \nabla_{\boldsymbol{w}} L(\boldsymbol{w}) + \frac{1}{2} \boldsymbol{\delta}^T \cdot \nabla_{\boldsymbol{w}}^2 L(\boldsymbol{w}) \cdot \boldsymbol{\delta}] \\
\approx &L(\boldsymbol{w}) +\mathbb{E}[\boldsymbol{\delta}]\cdot \nabla_{\boldsymbol{w}} L(\boldsymbol{w})  + \frac{\mathbb{E}[\boldsymbol{\delta}^2]}{2} \operatorname{Tr}(\nabla_{\boldsymbol{w}}^2 L(\boldsymbol{w})) 
\end{split}
\end{equation}
where $\boldsymbol{w}$ is the weight of a DNN $f(\boldsymbol w)$. The third term of Eq.~\eqref{eqt:talor_3} involves the Hessian matrix's trace.~\cite{keskar-eigenvalue1-iclr-2017} and \cite{Wen-eigenvalue2-icais-2020} verified that the smaller the trace of the Hessian matrix is, the flatter the landscape of loss is. 
A flat loss surface generally aids the model in finding good local optima to enhance the model's generalization capability. 

\begin{figure}[t!]
    \centering
    \includegraphics[width=0.8\linewidth]{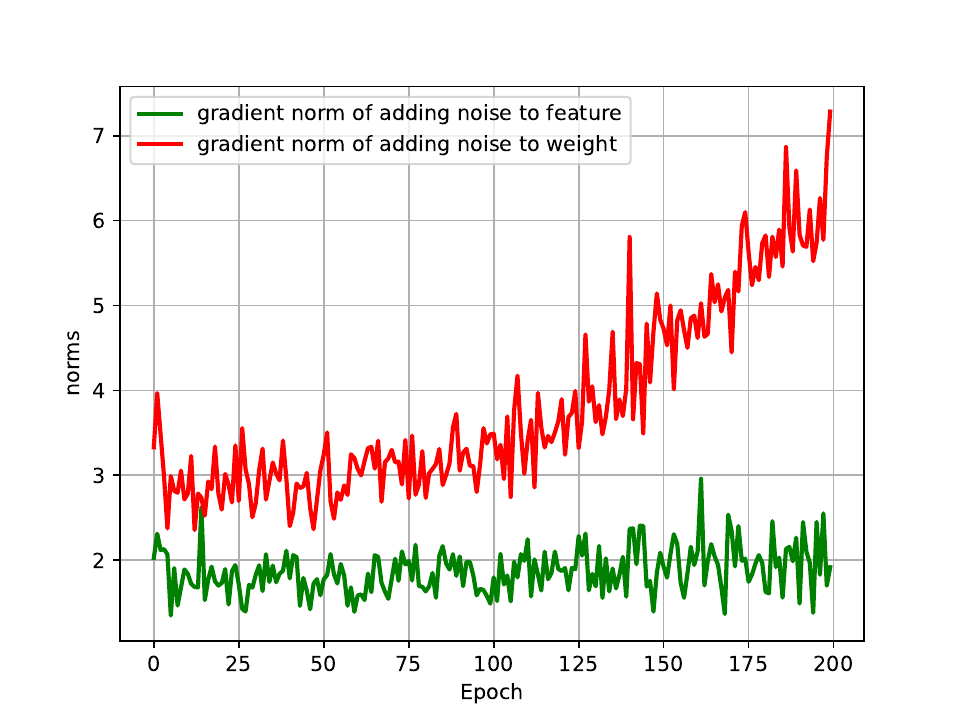}
    \caption{A comparison of the norm $ \|\nabla_{\boldsymbol{w}} L(\boldsymbol{w})\|$ trajectories of ResNet-18 on CIFIAR-10.}
    \label{fig:norm_cmp}
\end{figure}

We should make the second term in Eq.~\eqref{eqt:talor_3} be zero as follows:
\begin{equation}
\label{eqt:first_order_of_taylor}
    \mathbb{E}[\boldsymbol{\delta}]\cdot \nabla_{\boldsymbol{w}} L(\boldsymbol{w})=0.
\end{equation}
There are two conditions make the Eq.\eqref{eqt:first_order_of_taylor} be as zero as possible:
\begin{itemize}
    \item Condition 1: The $\nabla_{\boldsymbol{w}} L(\boldsymbol{w})$ should be close enough to zero.
    \item Condition 2: The expectation of perturbation $\mathbb{E}[\boldsymbol{\delta}]$ of every layer should be zero.
\end{itemize}
Intuitively, when a well-trained model $f(\boldsymbol{x},\boldsymbol{w})$ is used to fine-tune the QAT model, the $\nabla_{\boldsymbol{w}} L(\boldsymbol{w})$ would be nearly equal to zero. However, when a certain perturbation $\boldsymbol{\delta}$ is injected into either the weight or the feature of each layer, a natural question is: which scheme would incur a smaller norm of the gradient $\nabla_{\boldsymbol{w}} L(\boldsymbol{w})$ than the other?   

Therefore, we investigated the norm of the gradient $\nabla_{\boldsymbol{w}} L(\boldsymbol{w})$ when the perturbation $\boldsymbol{\delta}$ is  injected into either the weight or the feature of resnet-18 model on the CIFAR-10 dataset. Results in Fig.~\ref{fig:norm_cmp} show that the norm of the gradient norm $ \|\nabla_{\boldsymbol{w}} L(\boldsymbol{w})\| $ caused by the weight perturbation is larger than that of the feature perturbation. Therefore, in our work, we only apply perturbations to the features.

\subsubsection{Feature Perturbation Quantization}

Feature Perturbation Quantization (FPQ) introduces the perturbations into features, pushing the quantized model to live at a flat local minima. Without loss of generality, taking DNN as an example, we assume that a network has $N$ convolutional layers. Given the input sample $\boldsymbol{x}$, the inputs of each layer for a quantized model are represented as, $\left \{  \boldsymbol{x}_{s}^{1} ,  \boldsymbol{x}_{s}^{2},\dots, \boldsymbol{x}_{s}^{N} \right \} $. We apply perturbations to the inputs of each convolutional layer of quantized model as follows:
\begin{equation}\label{eqt:FPQ}
\boldsymbol{x}^l_s = \boldsymbol{x}^l_s + \boldsymbol{\boldsymbol{\delta}}, \boldsymbol{\delta} \sim  U[-\frac{\boldsymbol{s}^l}{2}, \frac{\boldsymbol{s}^l}{2}]
\end{equation}
where $\boldsymbol{x}^l_s$ is the input feature of the $l$-th layer of quantized model, $\boldsymbol{\delta}$ is the injected perturbation which follows a uniform distribution $U[-\frac{\boldsymbol{s}^l}{2}, \frac{\boldsymbol{s}^l}{2}]$, and $\boldsymbol{s}^l$ is the quantization parameter scale corresponding to the input feature $\boldsymbol{x}^l$. However, simply injecting perturbations by FPQ could disrupt the features of each layers of DNN, empirically resulting in inferior results.


Proposition~\ref{pro:biased_expectation} discovers that if $\boldsymbol{w}^{l}$ represents the weights of the $l$-th convolutional layer  and $\boldsymbol{x}^{l}$ is the corresponding input feature, the expected perturbation $\mathbb{E}[\boldsymbol{\delta}]$ is always biased.

\newtheorem{myobr}{Observation}
\newtheorem{mydef}{Definition}
\newtheorem{mythe}{Theorem}
\newtheorem{mypro}{Proposition}

\begin{mypro}
\label{pro:biased_expectation}
(Inject noise into multiple layers would result in a biased expectation) Without loss of generality, given a DNN with $N (N\ge2)$ layers, if noises $\boldsymbol{\delta}^l (1\leq l \leq N)$  are simultaneously injected into different layers,  the accumulated perturbation from different layers is biased.  
\end{mypro}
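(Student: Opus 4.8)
The plan is to track how a zero-mean injected noise loses its zero-mean property once it is pushed through the nonlinear layer maps, and to show that the surviving bias is a genuine second-order (curvature) effect that cannot vanish for a nonlinear network. First I would reduce to the representative case $N=2$: since any deeper network contains such a composition as a sub-block and the argument only adds further non-vanishing terms, establishing bias for two layers suffices. Denote by $F^{l}$ the (nonlinear) transformation of the $l$-th layer, so that the clean output feature is $\boldsymbol{x}^{3}=F^{2}(F^{1}(\boldsymbol{x}^{1}))$, while injecting $\boldsymbol{\delta}^{1}$ at the input of layer $1$ and $\boldsymbol{\delta}^{2}$ at the input of layer $2$ produces $\tilde{\boldsymbol{x}}^{3}=F^{2}\!\left(F^{1}(\boldsymbol{x}^{1}+\boldsymbol{\delta}^{1})+\boldsymbol{\delta}^{2}\right)$. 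The accumulated perturbation is $\Delta=\tilde{\boldsymbol{x}}^{3}-\boldsymbol{x}^{3}$, and the goal is to show $\mathbb{E}[\Delta]\neq\boldsymbol{0}$, which is exactly the failure of Condition~2.

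Next I would expand $\Delta$ to second order in the two noises, in the same spirit as Eq.~\eqref{eqt:talor_3}. To first order the contribution is a linear image of $\boldsymbol{\delta}^{1}$ and $\boldsymbol{\delta}^{2}$ through the layer Jacobians, and because each $\boldsymbol{\delta}^{l}\sim U[-\boldsymbol{s}^{l}/2,\boldsymbol{s}^{l}/2]$ is symmetric with $\mathbb{E}[\boldsymbol{\delta}^{l}]=\boldsymbol{0}$, every first-order term vanishes in expectation. The second-order terms, by contrast, involve the Hessians of $F^{1}$ and $F^{2}$ contracted with products $\boldsymbol{\delta}^{l}_i\boldsymbol{\delta}^{l}_j$; taking expectations kills the off-diagonal products but leaves the diagonal ones with coefficient $\mathbb{E}[(\boldsymbol{\delta}^{l}_i)^{2}]=(\boldsymbol{s}^{l}_i)^{2}/12>0$. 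Hence $\mathbb{E}[\Delta]\approx\frac{1}{2}\sum_{l} H^{l}\,(\boldsymbol{s}^{l})^{2}/12$, where $H^{l}$ collects the diagonal second-derivative entries of the $l$-th layer map — a strictly positive-weighted sum of layer curvatures that does not cancel against the vanishing first-order part.

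Finally I would argue non-degeneracy: for a nonlinear network the composed map $F^{2}\circ F^{1}$ carries a nonzero second derivative (the activation supplies curvature, and for $N\ge 2$ at least one nonlinearity is traversed by the propagated noise), so the surviving term is generically nonzero and $\mathbb{E}[\Delta]\neq\boldsymbol{0}$, i.e.\ the accumulated perturbation is biased. I expect the main obstacle to be this last non-degeneracy step rather than the expansion itself: one must rule out an accidental cancellation of the curvature contributions across layers, and, if the activation is taken to be ReLU, replace the smooth second-order Taylor argument by a convexity/Jensen argument at the kink, since $\mathbb{E}[\phi(a+\boldsymbol{\delta})]>\phi(a)$ strictly whenever the zero-mean noise straddles the non-differentiable point. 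Showing that this strict inequality survives the composition — so that the bias cannot be engineered away — is the crux of the claim and also the natural bridge to why a feature-distillation term is needed to compensate for it.
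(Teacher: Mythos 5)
Your proposal follows essentially the same route as the paper's proof: decompose the accumulated perturbation at a layer into the directly injected zero-mean noise plus the noise propagated through the preceding layer map, Taylor-expand that propagated term to second order, note that symmetry of $\boldsymbol{\delta}^{l}$ kills the first-order (Jacobian) contribution, and conclude that the surviving Hessian quadratic form $\mathbb{E}\bigl[\tfrac{1}{2}(\boldsymbol{\delta}^{l-1})^T \nabla^2 f_{l-1}(\boldsymbol{x}^{l-1})\, \boldsymbol{\delta}^{l-1}\bigr]$ is the source of the bias. If anything, you are more careful than the paper on the final step: the paper simply asserts this term is $\ne 0$, whereas you make the uniform-noise variance $(\boldsymbol{s}^{l})^{2}/12$ explicit and flag the non-degeneracy issue (ruling out cancellation, and replacing the smooth Taylor argument by a Jensen-type convexity argument at the ReLU kink), which is a genuine gap in the paper's own argument rather than in yours.
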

\begin{proof}
Without loss of generality, a DNN is represented as 
\begin{equation}
\label{eqt:feed-forward}
f_N(\boldsymbol{x})=\left(\phi_{N} \ldots \circ \phi_{l} \circ \ldots \circ \phi_{0}\right)(\boldsymbol{x}),
\end{equation}
where $\phi_l$ could be any module of the neural network, \textit{ e.g.}, conventional layer, attention layer. 

For the $l$-th layer, the perturbation comes from two sources: the first one is the perturbation $\boldsymbol{\delta}^{l}$ that we directly injected by Eq.~\eqref{eqt:FPQ}; and the second one is from the perturbation propagated from the last layer $\phi_{l-1}$. 
Therefore, the expectation of the perturbation of the $l$-th layer $\mathbb{E}\left ( \boldsymbol{\delta} _{\text{total}} \right )$ is as follows:
\begin{equation}
\begin{small}
\begin{aligned}
\label{eqt:n_layer_perturbation}
    &\quad \,\mathbb{E}\left ( \boldsymbol{\delta} _{\text{total}} \right ) \\
    & =\mathbb{E}\left [ \boldsymbol{\delta}^{l} +  f_{l-1}\left ( \boldsymbol{x}^{l-1}+\boldsymbol{\delta}^{l-1} \right )
    -f_{l-1}\left (  \boldsymbol{x}^{l-1}\right )    \right ]  \\
    &= \mathbb{E}\left ( \boldsymbol{\delta}^{l} \right ) +\mathbb{E}\left [  f_{l-1}\left (\boldsymbol{x}^{l-1}+\boldsymbol{\delta}^{l-1} \right ) -f_{l-1}\left ( \boldsymbol{x}^{l-1}\right )    \right ].
\end{aligned}
\end{small}
\end{equation}
The first term in Eq.~\eqref{eqt:n_layer_perturbation} is a uniform distribution with a expectation of $\mathbb{E}\left ( \boldsymbol{\delta}^{l} \right )=0$. Consequently, we have:
\begin{equation}\label{eqt:perturbation_propagation}
\begin{aligned}
    &\quad \,\mathbb{E}\left ( \boldsymbol{\delta} _{\text{total}} \right ) \\
    &= \mathbb{E}\left [  \left ( f_{l-1}\left ( \boldsymbol{x}^{l-1}+\boldsymbol{\delta}^{l-1} \right ) -f_{l-1}\left ( \boldsymbol{x}^{l-1}\right )  \right )  \right ].
\end{aligned}
\end{equation}

Assuming that the function $f_{l-1}$ is smooth at the point $\boldsymbol{x}^{l-1}$, and that the perturbation $\boldsymbol{\delta}^{l-1}$ is NOT sufficiently small, we apply the second-order Taylor approximation to Eq.~\eqref{eqt:perturbation_propagation} as follows:
\begin{equation}\label{eqt:noise_bias}
\begin{small}
\begin{aligned}
   &\quad \, \mathbb{E}\left(\delta_{\text{total}}\right) \\
   &\approx \mathbb{E}\left[\nabla f_{l-1}(\boldsymbol{x}^{l-1})^T \boldsymbol{\delta}^{l-1} + \frac{1}{2} (\boldsymbol{\delta}^{l-1})^T \nabla^2 f_{l-1}(\boldsymbol{x}^{l-1}) \boldsymbol{\delta}^{l-1}\right] \\
   &= \mathbb{E}\left[\frac{1}{2} (\boldsymbol{\delta}^{l-1})^T \nabla^2 f_{l-1}(\boldsymbol{x}^{l-1}) \boldsymbol{\delta}^{l-1}\right] \ne 0.
\end{aligned}
\end{small}
\end{equation}
\end{proof}

To reduce the bias, we propose a Stochastic Feature Perturbations (SFP) with a predefined probability $p (p>0)$. Concretely, the $l$-th features $\boldsymbol{x}^{l}$ is stochastically injected noise as follows:
\begin{equation}\label{eqt:FPQ_with_p}
\boldsymbol{x}^l_s = \boldsymbol{x}^l_s + \boldsymbol{\delta},\text{with probability }p.
\end{equation}

With SFP, the probability of injecting perturbations into all layers simultaneously is $p^{N}$, which significantly reduces the occurrence of the bias in Eq.~\eqref{eqt:noise_bias}, \textit{e.g.}, $0.1^{10}=1e{-}11$ for a DNN with 10 layers. Besides, we combined with a feature distillation called Channel-wise Standardization Distillation (CSD) to make the Condition 1 hold.

\subsubsection{Channel-wise Standardization Distillation}

\begin{figure}[h!]
    \centering
    \includegraphics[width=0.95\linewidth]{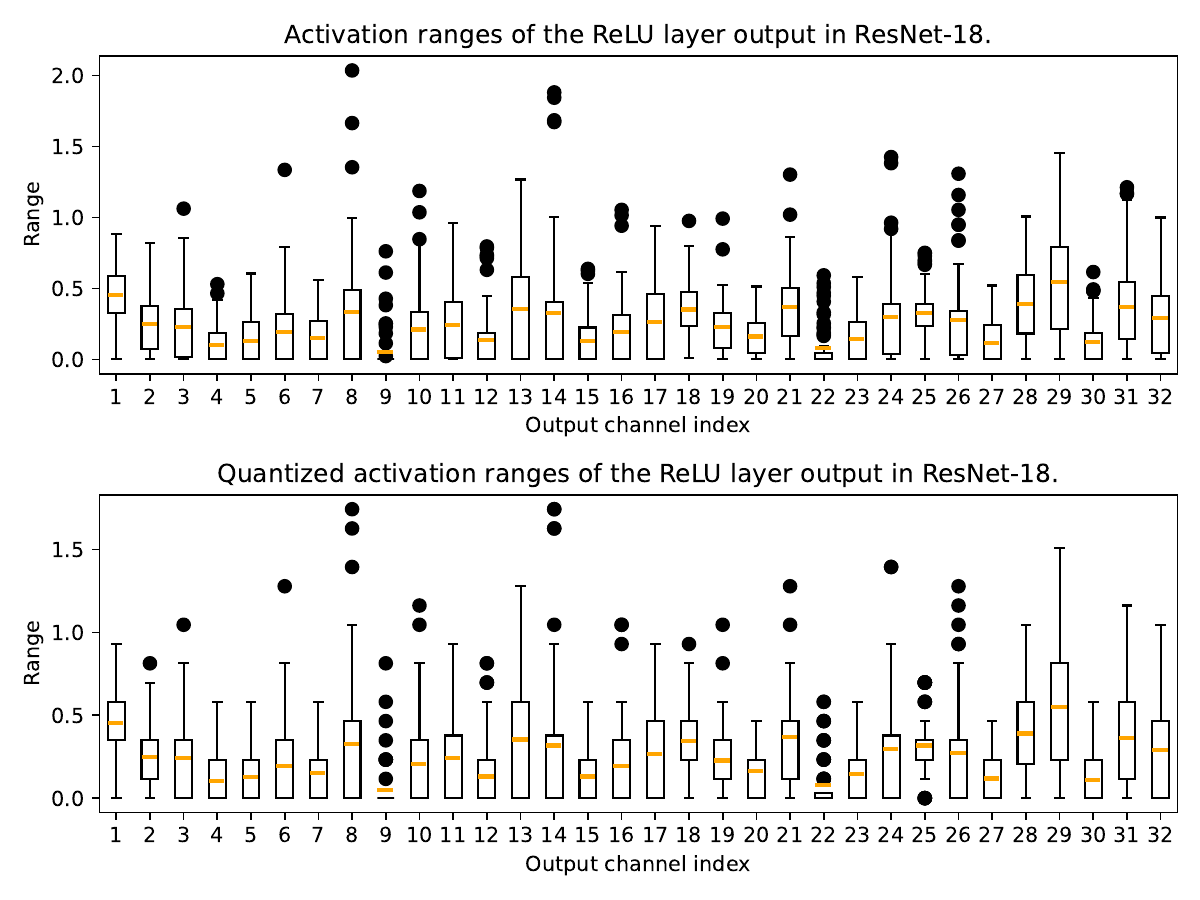}
    \caption{Comparisons of the output feature of the ResNet-18 before and after quantization. The orange line represents the mean value of the feature (best viewed in color).} 
    \label{fig:activation_cmp}
\end{figure}

We have observed that the quantization operation can affect the distribution of features. We visualized the output features of a ReLU layer in the ResNet-18 model to compare the changes in features before and after quantization. The results are shown in Fig.~\ref{fig:activation_cmp}. The findings indicate that the quantization operation can cause a shift in the distribution. Specifically, in Fig.~\ref{fig:activation_cmp}, the feature with index 30 had a mean value of 0.12 before quantization, while after quantization, the mean value shifted to 0.11, resulting in a relative error of nearly 10\%. The drifted distribution is unfriendly for distillation~\cite{nagel-dfq-iccv-2019}~\cite{he-ptqd-nips-2024}. In this work, To mitigate drifted distribution caused by quantization, we standardize features before distillation for forcing Mean Squared Error (MSE) to focus the feature distribution rather than the absolute values as follows: 
\begin{equation}
\label{eqt:standardization}
 \tilde{\boldsymbol{z}}^{l} = Norm(\boldsymbol{z}^{l}) = \frac{\boldsymbol{z}^{l}-\mu \left ( \boldsymbol{z}^{l} \right ) }{\sqrt{\sigma ^{2} \left ( \boldsymbol{z}^{l} \right ) + \epsilon } } ,
\end{equation}
where $\boldsymbol{z}^{l}$ is the output of the $l$-th layer, $\tilde{\boldsymbol{z}}^{l}$ is the normalized feature, $\mu \left ( \boldsymbol{z}^{l} \right )$ is the mean of feature $\boldsymbol{x}^{l}$, $\sigma ^{2} \left ( \boldsymbol{z}^{l} \right ) $ is the variance of the feature, and $\epsilon$ is the constant preventing the denominator being 0.

The student model (quantized) $f_S\left (\boldsymbol{x} ; \boldsymbol{w}_{s}, \boldsymbol{s}_s, \boldsymbol{z}_s\right )$ is initialized with a calibration dataset, while the teacher model (FP)  $f_T\left (\boldsymbol{x} ; \boldsymbol{w}_{t}, \boldsymbol{s}_t, \boldsymbol{z}_t\right )$ serves as a reference. The standardized outputs of the $l$-th layer in the quantized model and that of the FP model are as follows:
\begin{equation}
\label{eqt:output_standardization}
    \begin{aligned}
    \tilde{\boldsymbol{z}}_{s}^{l}=Norm(\hat{\boldsymbol{w}}_{s}^{l} \boldsymbol{x}_{s}^{l}),\\
    \tilde{\boldsymbol{z}}_{t}^{l}=Norm(\boldsymbol{w}_{t}^{l} \boldsymbol{x}_{t}^{l}),
    \end{aligned}
\end{equation}
where $\tilde{\boldsymbol{z}}_{s}^{l}$ and $\tilde{\boldsymbol{z}}_{t}^{l}$ are the standardized features of the $l$-th layer of the quantized model and the FP one, respectively.

To make Condition 1 hold, one reasonable assumption is that if the output of student model  $f_S\left (\boldsymbol{x} ; \boldsymbol{w}_{s}, \boldsymbol{s}_s, \boldsymbol{z}_s\right )$ is equal to that of the teacher one  $f_T\left (\boldsymbol{x} ; \boldsymbol{w}_{t}, \boldsymbol{s}_t, \boldsymbol{z}_t\right )$, the gradient $\nabla_{\hat{\boldsymbol{w}}} L(\hat{\boldsymbol{w}}) $ tends to be zero.  The CSD is as follows:
\begin{equation}
\label{eqt:csd}
L_{CSD} =\sum_{i\in \left [ 1,N \right ] }^{} \sum_{j\in [1,c]}^{} \left \|\tilde{\boldsymbol{z}}_{t}^{i;j}-\tilde{\boldsymbol{z}}_{s}^{i;j} \right \| _{2}^{2} ,
\end{equation}
where $N$ represents the number of layers, $c$ represents the number of channels output by the $i$-th convolution, $\tilde{\boldsymbol{z}}_{s}^{i;j}$, $\tilde{\boldsymbol{z}}_{t}^{i;j}$ represent the standardized features of the $j$-th channel of the $i$-th convolution of the quantized model and the FP model, respectively, $\left \| \cdot  \right \| _{2} $ denotes $\ell _{2} $ normalization.

\subsection{Implicit Regularization on Hessian Matrix}

In the following, we first explain why the spectral norm of Hessian is correlated with solution quality, and then formally show that the difference of our algorithms with Sharpness-aware Minimization (SAM).

\textbf{Why is Hessian norm correlated with solution quality?}

Assume $\boldsymbol{w}^*$ is the optimal weight of Eq.~\eqref{eqt:Loss_example}. Based on Taylor expansion and assume $\nabla L_{\text{val}}(\boldsymbol{w}^*) = 0$ due to optimality condition, we have
\begin{equation}
\label{eqt:hesesian}
L_{\text{val}}(\hat{\boldsymbol{w}}) = L_{\text{val}}(\boldsymbol{w}^*) + \frac{1}{2} (\hat{\boldsymbol{w}} - \boldsymbol{w}^*)^T \hat{H} (\hat{\boldsymbol{w}} - \boldsymbol{w}^*), 
\end{equation}
where $\hat{H} = \int_{\boldsymbol{w}^*}^{\hat{\boldsymbol{w}}} \nabla^2 L_{\text{val}}(\boldsymbol{w}) \, d\boldsymbol{w}$ is the average Hessian. If we assume that Hessian is stable in a local region, then the quantity of $C = \|\nabla^2 L_{\text{val}}(\boldsymbol{w}^*)\| \|\hat{\boldsymbol{w}} - \boldsymbol{w}^*\|$ can approximately bound the performance drop when projecting $\boldsymbol{w}^*$ to $\hat{\boldsymbol{w}}$. After fine tuning, if $\hat{\boldsymbol{w}}$ is the new optimal weight vector, we expect $L_{\text{val}}( \hat{\boldsymbol{w}})$ to be smaller than $L_{\text{val}}( \boldsymbol{w}^{*})$ , provided that the training and validation losses are highly correlated. Therefore, the performance of $L_{\text{val}}(\hat{\boldsymbol{w}})$, which is the quantity that we care for, will also be bounded by $C$. Note that the bound $C$ could be quite loose since that the network weights changed when transitioning from $\boldsymbol{w}^*$ to $\hat{\boldsymbol{w}}$. A more precise bound can be obtained by treating $g(\boldsymbol{w}) = L_{\text{val}}(\boldsymbol{w})$ as a function only parameterized by $\boldsymbol{w}$, and then calculate its derivative/Hessian by implicit function theory.

\textbf{The difference between our method and SAM.}
The concept of searching for minima characterized as ``flat minima'' was introduced in \cite{hochreiter-flatmini-1994-nips} and extensive research has been conducted to explore its connection with the generalization ability~\cite{andriushchenko-understanding-pmlr-2022}~\cite{zhang-1understanding-acm-2021} . SAM~\cite{foret-sam-2020-arxiv} enhances the generalization ability of the model as follows:
\begin{equation}
\label{eqt:sam}
\begin{aligned}
\min_{\boldsymbol{w}} \quad & L^{SAM}(\boldsymbol{w}) + \lambda \|\boldsymbol{w}\|^2, \\
\text{where} \quad & L^{SAM}(\boldsymbol{w}) = \max_{\|\boldsymbol{\epsilon}\| \leq \rho} L(\boldsymbol{w} + \boldsymbol{\epsilon}),
\end{aligned}
\end{equation}
in which $\boldsymbol{\epsilon}$ represents weight perturbations in an Euclidean ball within the radius $\rho$, 
$L^{SAM}$ is the perturbed loss, and $\lambda \|\boldsymbol{w}\|^2$ is the standard L2 norm. 

In contrast, the proposed FPQ is formally defined as follows:
\begin{equation}
\begin{aligned}
\min_{\boldsymbol{w}} & \quad   L_{\boldsymbol{\epsilon} \sim U[-\frac{\boldsymbol{s}^l}{2}, \frac{\boldsymbol{s}^l}{2}]}(\boldsymbol{w} ;\boldsymbol{x}^l+\boldsymbol{\epsilon}),\\
& s.t.\quad  \mathbb{E}[\boldsymbol{\delta}]\cdot \nabla_{\boldsymbol{w}} L(\boldsymbol{w})=0,
\end{aligned}
\end{equation}
where $ \boldsymbol{x}^l$ is the input feature of the $l$-th layer in a DNN. 

\begin{table}[h]
\centering
\caption{The comparison of results for optimizing LSQ using SAM and FPQ, and the accuracy (\%) of the model with W4A4 quantization on the CIFAR-10 dataset.}
\label{tab:cmp_sam_fpq}
\begin{tabular}{ccc}
\toprule
Methods & {Res18} & {MBV2} \\
\midrule
Full prec. & 88.72 & 85.81 \\
LSQ+SAM    & 89.75 & {84.72} \\
FPQ    & \textbf{90.16} & \textbf{85.53} \\
\bottomrule
\end{tabular}
\end{table}

Rather than directly perturbing the weight in Eq.~\eqref{eqt:sam}, we instead perturbing the features through Eq.~\eqref{eqt:FPQ}. 
The comparison of results between FPQ and SAM is shown in Tab.~\ref{tab:cmp_sam_fpq}.  As discussed in Eq.~\eqref{eqt:talor_3}, FPQ imposes an implicit regularization on the Hessian matrix. The results indicate that the optimization using SAM performs worse than that using FPQ. One possible reason is that in high-dimensional spaces, there are many directions of perturbations $\boldsymbol{\delta}$ and their size $ \rho $ to be chosen, and the random approach in FPQ may outperform the method of selecting the largest perturbation in SAM for QAT.


\subsection{Training Processing}
\label{sec:training}

Given a labeled sample $(X,y)$ and the corresponding label $y$, the output of the quantized model and the FP model are:

\begin{equation}
\label{eqt:forward}
    \begin{aligned}
     \boldsymbol{z}_{s}^{o} =f_S\left (\boldsymbol{x} ; \hat{\boldsymbol{w}}_{s}, \boldsymbol{s}_s\right ), \\
     \boldsymbol{z}_{t}^{o} =f_T\left (\boldsymbol{x} ; \hat{\boldsymbol{w}}_{t}, \boldsymbol{s}_t\right ) .
    \end{aligned}
\end{equation}

In the forward through Eq.~\eqref{eqt:forward}, we apply with stochastic perturbation $\boldsymbol{\delta}$ in all the input of convolutional layer with a predefined probability $p$. The perturbation drawn from a uniform distribution through Eq.~\eqref{eqt:FPQ}.

In this paper, we focus on the classification task. Therefore, the whole loss consists of two components as follows:
\begin{equation}
\label{eqt:compositedloss}
L_{total}\left ( \hat{\boldsymbol{w}},\boldsymbol{x} \right ) = CE\left ( \boldsymbol{z}_{s}^{o}, y \right ) +   L_{CSD} ,
\end{equation}
where $CE\left ( \cdot ,\cdot  \right )$ denotes the cross-entropy loss for classification tasks, $z_{o}$ presents the output from the quantized model for the labeled data $(X,y)$. Training procedure is shown in Algorithm \ref{alg:csd}.

\begin{algorithm}[t!]
   \caption{QAT with FPQ}
   \label{alg:csd}
\begin{algorithmic}[1]
   \STATE {\bfseries Input:} 
      labeled data $\boldsymbol{X}$;
       FP model $ f\left (\boldsymbol{x} ; \boldsymbol{w}\right ) $
   \FOR{$i=1$ {\bfseries to} $\cdots$}
        \STATE Feed the  $\boldsymbol{X}$ into models based on Eq.~\eqref{eqt:forward},  while injecting perturbations into all layers of a DNN with probability $p$ by Eq.~\eqref{eqt:FPQ_with_p};
        \STATE Obtain and standarize the outputs of each layer of the quantized model and the FP model based on Eq.~\eqref{eqt:output_standardization};
        \STATE Update for student model based on Eq.~\eqref{eqt:csd} and Eq.~\eqref{eqt:compositedloss};
    \ENDFOR \\
    \STATE {\bfseries Output:} 
      $f_S\left (\boldsymbol{x} ; \boldsymbol{w}_{s}, \boldsymbol{s}_s, \boldsymbol{z}_s\right )$.
\end{algorithmic}
\end{algorithm}

\section{Experiments}


\textbf{Experimental Protocols and Datasets.} Our code is based on PyTorch~\cite{paszke-pytorch-nips-2019} and relies on the MQBench~\cite{li-mqbench-arxiv-2021} package. We used asymmetric quantization by default. In this paper, we used the CIFAR-10 and CIFAR-100~\cite{krizhevsky-cifar-arxiv-2009} as dataset. We randomly selected 100 images for CIFAR-10 and CIFAR-100 as the calibration set. We also kept the first and last layers with 8-bit quantization, the same as QDrop~\cite{wei-qdrop-arxiv-2022}. Additionally, we employed per-channel quantization for weight quantization. We used WXAX to represent X-bit weight and activation quantization. 

\textbf{Training Details.} We used SGD as the optimizer, with a batch size of 256 and a base learning rate of 0.01. The default learning rate (LR) scheduler followed the cosine annealing method. The weight decay was 0.0005, and the SGD momentum was 0.9. We trained for 200 epochs otherwise specified.

\begin{table}[h!]
\begin{center}
\centering
\caption{Ablation studies of perturbation and CSD (Accuracy $\%$) on CIFAR-10 with W2A4 quantization.}
\label{tal：abl_method}
\begin{tabular}{ccc}
\toprule
\begin{tabular}[c]{@{}l@{}}
Perturbations
\end{tabular} &
\begin{tabular}[c]{@{}l@{}}CSD
\end{tabular} &
\begin{tabular}[c]{@{}l@{}}ResNet-18\\
\end{tabular}

\\ \midrule
                  &                               &88.36                                 \\
 \checkmark      &                               &89.30                                 \\
                 & \checkmark                    &89.66                                 \\
 \checkmark      & \checkmark                    &\textbf{89.92}                                 \\
\bottomrule
\end{tabular}
\end{center}
\end{table}

\subsection{Ablation Study}
\label{sec4.1}

\textbf{Effectiveness of probability FPQ.} To investigate the impact of FPQ, we conducted ablation experiments to validate the effects of the perturbation and $L_{CSD}$. We used the W2A4 model of ResNet-18~\cite{he-resnet-cvpr-2016} with the LSQ method as the baseline, which achieved an accuracy of 88.36$\%$ on CIFAR-10 dataset. Tab.~\ref{tal：abl_method} illustrated that FPQ has improved the accuracy of W2A4 quantized models for ResNet architectures. Adding only the perturbation in Eq.~\eqref{eqt:FPQ_with_p} improved the baseline by 0.94\%, using only CSD improved the baseline by 1.3\%, and when both were used together, the result was better, improving the baseline by 1.56\%.

\begin{table}[htbp]
\centering
\caption{The effectiveness of the probability $p$ on CIFAR-10  with W2A4 quantization (Accuracy $\%$).}
\label{tab:abl_p}
\begin{tabular}{cc}
\toprule
{$p$ of FPQ in Eq.~\eqref{eqt:FPQ_with_p}} & {Val acc (\%)} \\
\midrule
{0} & 89.66 \\ 
{0.1} & \textbf{89.92} \\
{0.3} & 89.90 \\
{0.5} & 89.83 \\
{0.7} & 89.59 \\
{0.9} & 89.68 \\
{1} & 89.48 \\
\bottomrule
\end{tabular}
\end{table}

\textbf{Effectiveness of probability $p$.} To investigate the impact of probability $p$ in Eq.~\eqref{eqt:FPQ_with_p}, we used CIFAR-10 as the dataset. Tab.~\ref{tab:abl_p} illustrated that the performances of the W2A4 model with the seven different sets of $p$. The experimental results indicated that when the random probability $p$ is too high, excessive perturbations can disrupt the local minima of the quantized model. Conversely, when the perturbations are too small, the insufficient noise failed to smooth the sharpness of the model's loss landscape. An optimal probability $p$ lies within the range of 0.1 to 0.5.

\begin{figure}[t!]
    \centering
    \includegraphics[width=0.95\linewidth]{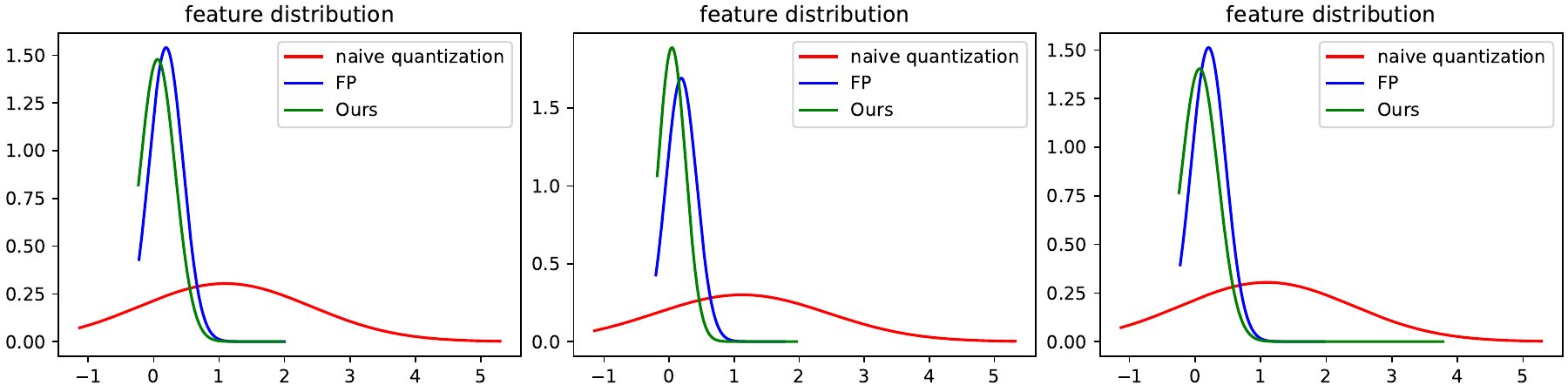}
    \caption{Feature distributions of the FP ResNet-18 (blue lines), native baseline (LSQ, red lines), and ours method (green lines) for the same feature of ResNet-18 model on CIFAR-10. We random select three samples and plot the output feature of the same layer.} 
    \label{fig:feature_distribution}
\end{figure}

\subsection{Literature Comparison}
\label{sec4.2}

 \begin{table}[h!]
\begin{center}
\centering
\caption{Comparison among different QAT strategies in terms of accuracy on CIFAR-10.}
\label{tbl:comp-CIFAR-10}
\resizebox{0.95\linewidth}{!}{
\begin{tabular}{clrcccc}
\toprule
\multicolumn{1}{c}{\begin{tabular}[c]{@{}c@{}}\textbf{Labeled} \\ \textbf{data}\end{tabular}} & \multicolumn{1}{l}{\textbf{Methods}} & \multicolumn{1}{c}{\textbf{W/A}} & \multicolumn{1}{c}{\textbf{Res18}} & \multicolumn{1}{c}{\textbf{Res50}}  & \multicolumn{1}{c}{\textbf{MBV1}} & \multicolumn{1}{c}{\textbf{MBV2}} \\ \midrule
50000                       & Full Prec.                           & 32/32 & 88.72         &89.95                     &85.52           &85.81      \\ \hline\hline
\multirow{15}{*}{50000}     & PACT~\cite{choi-pact-arxiv02018}     & 4/4   &88.15          &85.27                     &80.77           &79.88      \\
                            & LSQ~\cite{esser-lsq-arxiv-2019}      & 4/4   &86.69          &90.01                     &82.39           &84.45      \\
                            & LSQ+~\cite{bhalgat-lsq+-cvpr-2020}   & 4/4   &88.40          &90.30                     &84.32           &84.30  \\
                            & KD~\cite{hinton-kd-arxiv-2015}       & 4/4   &88.86          &90.34                     &84.77           &83.79  \\
                            & FPQ (Ours)                           & 4/4   &\textbf{90.16} &\textbf{90.62}            &\textbf{84.94}  &\textbf{85.53}     \\ \cmidrule{2-7}
                            & PACT~\cite{choi-pact-arxiv02018}     & 2/4   &87.55          &85.24                      &69.04           &67.18      \\
                            & LSQ~\cite{esser-lsq-arxiv-2019}      & 2/4   &88.36          &90.01                      &78.15           &78.15      \\
                            & LSQ+~\cite{bhalgat-lsq+-cvpr-2020}   & 2/4   &87.76          &89.62                      &81.26           &77.00      \\
                            & KD~\cite{hinton-kd-arxiv-2015}       & 2/4   &88.83          &90.18                      &78.84           &75.56  \\
                            & FPQ (Ours)                           & 2/4   &\textbf{89.92} &\textbf{90.39}             &\textbf{81.45}           &\textbf{79.33}      \\ \cmidrule{2-7}
                            & PACT~\cite{choi-pact-arxiv02018}     & 2/2   &76.90          &64.94              &11.71           &10.58        \\
                            & LSQ~\cite{esser-lsq-arxiv-2019}      & 2/2   &87.60          &87.79              &75.29           &70.32        \\
                            & LSQ+~\cite{bhalgat-lsq+-cvpr-2020}   & 2/2   &87.60          &86.10              &74.22           &72.18         \\
                            & KD~\cite{hinton-kd-arxiv-2015}       & 2/2   &88.06          &89.20              &68.62           &67.15         \\
                            & FPQ (Ours)                           & 2/2   &\textbf{88.11} &\textbf{89.40}      &\textbf{76.25}  &\textbf{73.00}      \\\bottomrule

\end{tabular}
}
\end{center}
\end{table}

We selected ResNet-18 and ResNet-50~\cite{he-resnet-cvpr-2016}, MobileNetV1~\cite{howard-mbv1-arxiv-2017} and MobileNetV2~\cite{sandler-mbv2-cvpr-2018} with depth-wise separable convolutions as the representative network architectures. 

\textbf{CIFAR-10.} In Tab.~\ref{tbl:comp-CIFAR-10}, we quantized the weights and activations to 2-bit and 4-bit. We compared our approach with the effective baselines, including LSQ~\cite{esser-lsq-arxiv-2019}, LSQ+~\cite{bhalgat-lsq+-cvpr-2020}, PACT~\cite{choi-pact-arxiv02018} and KD~\cite{hinton-kd-arxiv-2015}. Tab.~\ref{tbl:comp-CIFAR-10} illustrated that when the entire training set of CIFAR-10 is used, FPQ significantly surpassed the baselines. In W4A4 quantization, FPQ achieved about 1$\sim$2$\%$ accuracy improvements over LSQ. Furthermore, to explore the ability of FPQ, we conducted W2A4 and W2A4 quantization experiments. In W2A4 quantization, FPQ consistently achieved a 1$\sim$2$\%$ accuracy improvement over LSQ in Tab.~\ref{tbl:comp-CIFAR-10}. In W2A2 setting, FPQ achieved about 1$\sim$3$\%$ accuracy improvements over LSQ. Moreover, there are two interesting observations as follows:
\begin{itemize}
    \item For W4A4, our method significantly surpassed the FP counterparts for both ResNet-18 and ResNet-50.  For example, on the ResNet-18 model, FPQ surpassed the FP model by 1.44\% in accuracy, and on the ResNet-50 model, FPQ exceeded the FP model by 0.67\%.
    \item From W4A4 to W2A2, the performance drop of our method is significantly lower that the other SOTA methods. For instance, on the ResNet-50 model, the LSQ method decreased by 2.22\% when reducing from W4A4 to W2A2, while FPQ decreased by 1.22\%. On the MobileNetV1 model, the LSQ+ method saw a 10.1\% drop when going from W4A4 to W2A2, while FPQ decreased by 8.69\%
\end{itemize}

\begin{table}[t!]
\begin{center}
\centering
\caption{Comparison among different QAT strategies regarding accuracy on CIFAR-100.}
\label{tbl:comp-CIFAR-100}
\resizebox{0.95\linewidth}{!}{
\begin{tabular}{clrcccc}
\toprule
\multicolumn{1}{c}{\begin{tabular}[c]{@{}c@{}}\textbf{Labeled} \\ \textbf{data}\end{tabular}} & \multicolumn{1}{l}{\textbf{Methods}} & \multicolumn{1}{c}{\textbf{W/A}} & \multicolumn{1}{c}{\textbf{Res18}} & \multicolumn{1}{c}{\textbf{Res50}}  & \multicolumn{1}{c}{\textbf{MBV1}} & \multicolumn{1}{c}{\textbf{MBV2}} \\ \midrule
50000                       & Full Prec.                                   & 32/32 &75.40           & 78.94                   &70.22           &71.30      \\ \hline\hline
\multirow{15}{*}{50000}     & PACT~\cite{choi-pact-arxiv02018}             & 4/4   &74.17           &74.78                    &64.65           &64.06      \\
                            & LSQ~\cite{esser-lsq-arxiv-2019}              & 4/4   &75.30           &78.20                    &68.63           &69.01      \\
                            & LSQ+~\cite{bhalgat-lsq+-cvpr-2020}           & 4/4   &74.50           &77.39                    &67.89           &68.25  \\
                            & KD~\cite{hinton-kd-arxiv-2015}               & 4/4   &74.70           &78.80                    &\textbf{70.96}  &\textbf{71.66} \\
                            & FPQ (Ours)                                   & 4/4   &\textbf{75.84}  &\textbf{78.8}            &67.55           &68.24     \\ \cmidrule{2-7}
                            & PACT~\cite{choi-pact-arxiv02018}             & 2/4   &73.77          &74.72                     &49.98           &57.90      \\
                            & LSQ~\cite{esser-lsq-arxiv-2019}              & 2/4   &74.93          &77.82                     &65.13           &66.15      \\
                            & LSQ+~\cite{bhalgat-lsq+-cvpr-2020}           & 2/4   &73.90          &76.61                     &65.28           &\textbf{66.24}      \\
                             & KD~\cite{hinton-kd-arxiv-2015}              & 2/4   &74.35          &77.34                     &\textbf{66.90}  &63.77  \\
                            & FPQ (Ours)                                   & 2/4   &\textbf{75.77} &\textbf{78.11}            &64.68           &65.35      \\ \cmidrule{2-7}
                            & PACT~\cite{choi-pact-arxiv02018}             & 2/2   &65.16          &4.26                      &3.25            &8.39        \\
                            & LSQ~\cite{esser-lsq-arxiv-2019}              & 2/2   &71.80          &62.79                     &55.53           &31.08        \\
                            & LSQ+~\cite{bhalgat-lsq+-cvpr-2020}           & 2/2   &71.25          &64.21                     &55.56           &30.08         \\
                            & KD~\cite{hinton-kd-arxiv-2015}               & 2/2   &\textbf{73.13} &63.16                     &55.37           &28.56         \\
                            & FPQ (Ours)                                   & 2/2   &71.73          &\textbf{65.23}            &\textbf{58.46}  &\textbf{33.16}      \\\bottomrule

\end{tabular}
}
\end{center}
\end{table}

\textbf{CIFAR-100.} In Tab.~\ref{tbl:comp-CIFAR-100}, we evaluate quantization performance across varying bit-widths (W4A4, W2A4, W2A2) on CIFAR-100. Same to CIFAR-10, FPQ demonstrates superior performance in most scenarios. For W4A4 quantization, FPQ achieves the highest accuracy on ResNet-18 (75.84\%) and ResNet-50 (78.8\%), surpassing LSQ by 0.54\% and 0.6\% respectively. Notably, FPQ even exceeds the full-precision baseline by 0.44\% on ResNet-18. In W2A4 settings, FPQ maintains strong performance with ResNet-18 (75.77\%) and ResNet-50 (78.11\%), outperforming LSQ by 0.84\% and 0.29\%. For the challenging W2A2 configuration, FPQ achieves the best results on ResNet-50 (65.23\%), MobileNetV1 (58.46\%), and MobileNetV2 (33.16\%), demonstrating robustness under extreme quantization.

\subsection{Characteristics of FPQ}
\label{sec:Analysis}

\textbf{Generalization of FPQ:} It has been empirically pointed out that the dominant eigenvalue of $\nabla^2 L_{\text{val}}(\boldsymbol{w})$ (spectral norm of Hessian) is highly correlated with the generalization quality of QAT solutions~\cite{keskar-eigenvalue1-iclr-2017} \cite{Wen-eigenvalue2-icais-2020}. In standard QAT training, the Hessian norm is usually great, which leads to deteriorating (test) performance of the solutions. In Fig.~\ref{fig:trace_cpm_r18} and Fig.~\ref{fig:trace_cpm_mbv2}, we plot the Hessian trace during the training procedure and find that the Hessian trace of the proposed methods is significantly lower than that of the LSQ. The results demonstrate that FPQ would reduce the trace of the Hessian matrix numerically, thereby enhancing the model's generalization ability.

\textbf{Distribution Visualization of FPQ:}
We visualized the feature distribution before and after quantization and the FP model to explore whether FPQ could align the distribution of the quantized model with that of the FP model.

As shown in Fig.~\ref{fig:feature_distribution}, there was a significant distribution drift between the naive quantization and the FP model. This problem is the key reason for the drop in performance of the quantized model. Our method effectively alleviated this problem. On the one hand, FPQ leveraged the feature distribution of the FP model as the ground truth, reducing the distribution drift caused by quantization. On the other hand, FPQ retained the original task loss, making stochastic perturbation in Eq.~\eqref{eqt:FPQ_with_p} to smooth the local minima.


    

\begin{figure}[t]
    \centering
    \subfigure[ ResNet-18 ]{
        \includegraphics[width=0.45\linewidth]{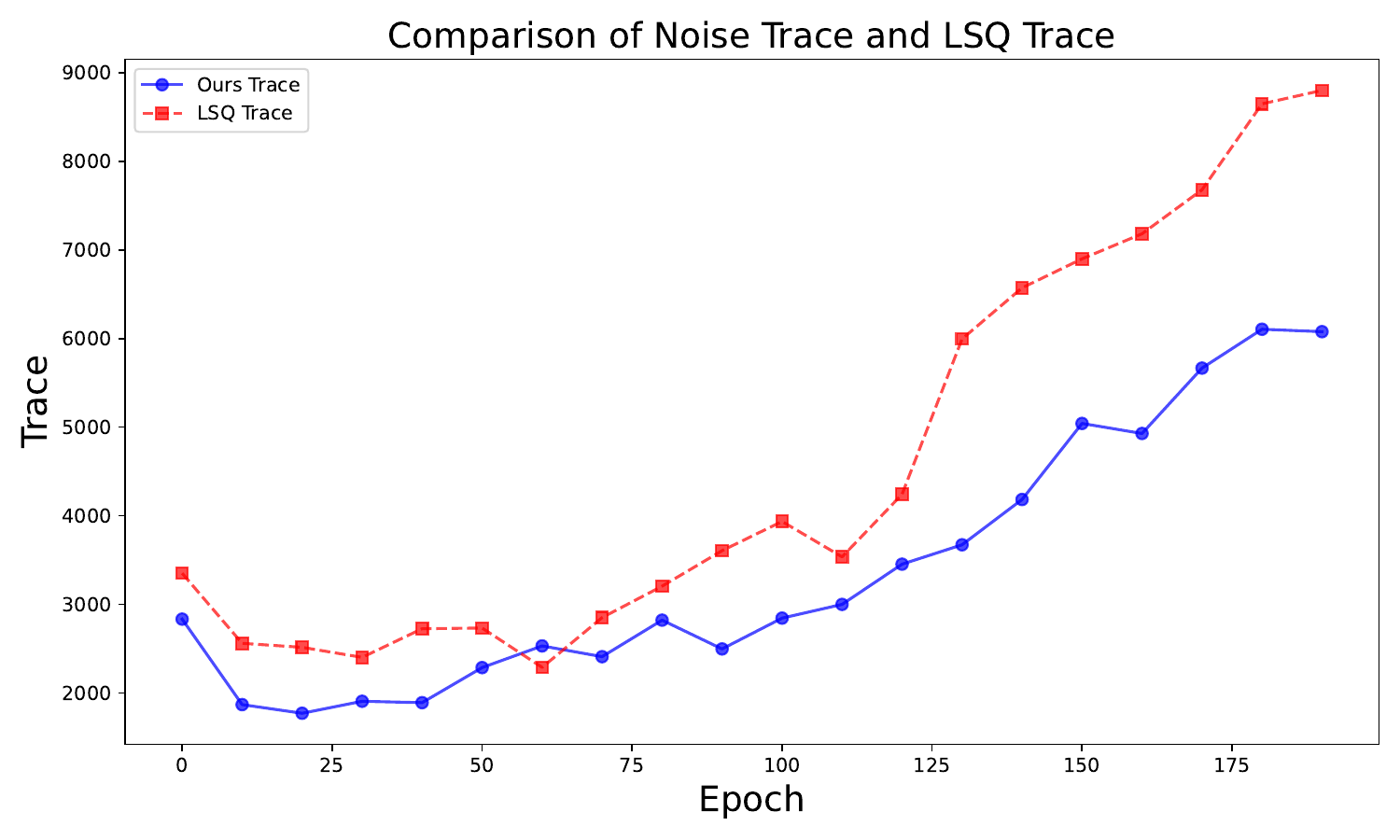}
        \label{fig:trace_cpm_r18}
    }
    \subfigure[MobileNetV2 ]{
        \includegraphics[width=0.45\linewidth]{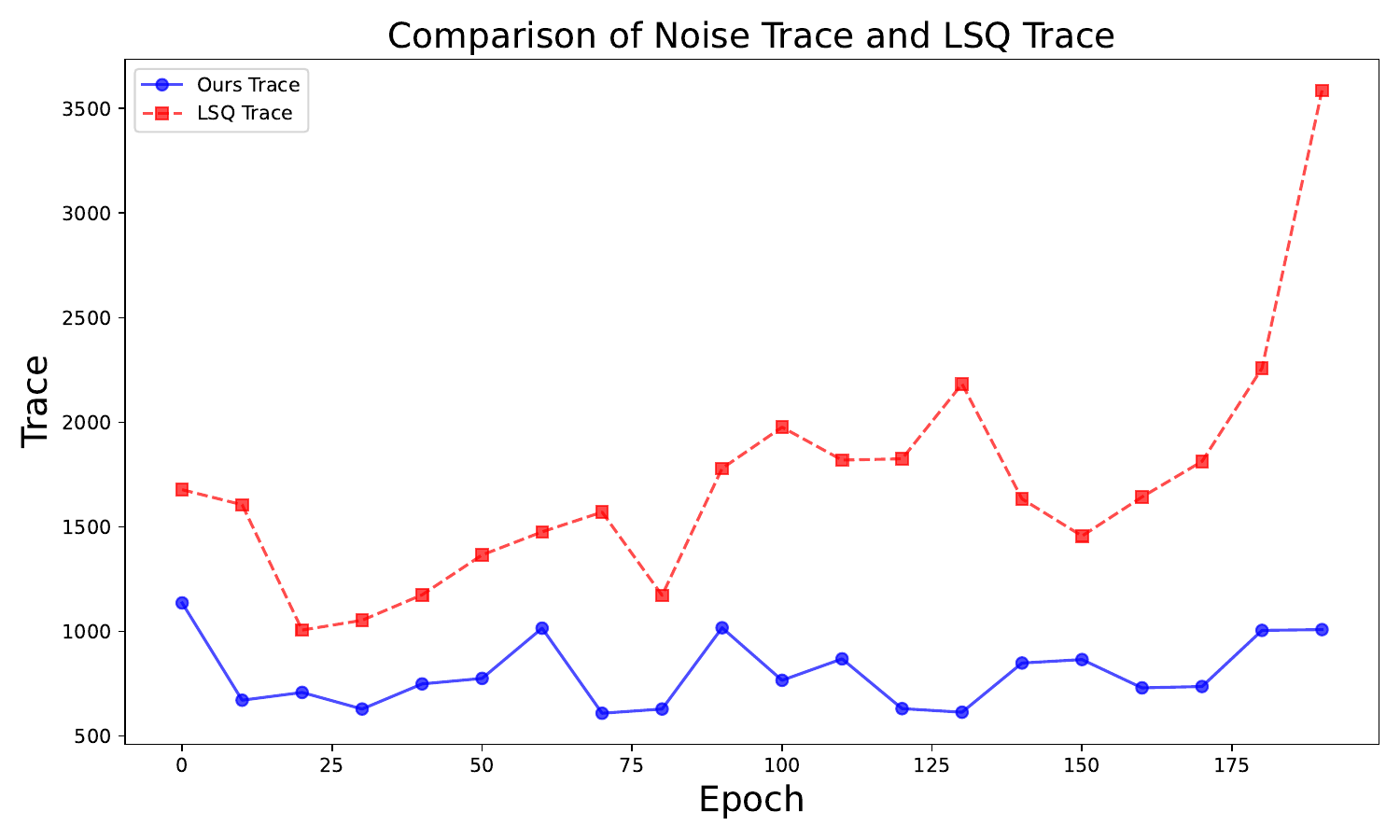}
        \label{fig:trace_cpm_mbv2}
    }
    \caption{Trajectory of the Hessian matrix trace for ResNet-18 and MobileNetV2 models on the CIFAR-10 dataset.}
\end{figure}

\section{Conclusion}
In this paper, we propose the FPQ for QAT-based method. By applying perturbations to the features, FPQ introduces the implicit regularization to the Hessian matrix, enhancing the stability of the model. Specifically, the regularization is carried out with random smoothing. FPQ possesses a much smoother landscape and has the theoretical guarantee to regularize the Hessian norm of the validation loss. Extensive experiments have illustrate the effectiveness of FPQ and we outperform various SOTA methods.

\bibliographystyle{named}
\bibliography{ijcai25}

\begin{thebibliography}{}

\bibitem[\protect\citeauthoryear{Andriushchenko and Flammarion}{2022}]{andriushchenko-understanding-pmlr-2022}
Maksym Andriushchenko and Nicolas Flammarion.
\newblock Towards understanding sharpness-aware minimization.
\newblock In {\em International Conference on Machine Learning}, pages 639--668. PMLR, 2022.

\bibitem[\protect\citeauthoryear{Bengio \bgroup \em et al.\egroup }{2013}]{bengio-ste-arxiv-2013}
Yoshua Bengio, Nicholas L{\'e}onard, and Aaron Courville.
\newblock Estimating or propagating gradients through stochastic neurons for conditional computation.
\newblock {\em arXiv preprint arXiv:1308.3432}, 2013.

\bibitem[\protect\citeauthoryear{Bhalgat \bgroup \em et al.\egroup }{2020}]{bhalgat-lsq+-cvpr-2020}
Yash Bhalgat, Jinwon Lee, Markus Nagel, Tijmen Blankevoort, and Nojun Kwak.
\newblock Lsq+: Improving low-bit quantization through learnable offsets and better initialization.
\newblock In {\em Proceedings of the IEEE/CVF Conference on Computer Vision and Pattern Recognition Workshops}, pages 696--697, 2020.

\bibitem[\protect\citeauthoryear{Chen and Hsieh}{2020}]{chen-stabilizing-icml-2020}
Xiangning Chen and Cho-Jui Hsieh.
\newblock Stabilizing differentiable architecture search via perturbation-based regularization.
\newblock In {\em International conference on machine learning}, pages 1554--1565. PMLR, 2020.

\bibitem[\protect\citeauthoryear{Choi \bgroup \em et al.\egroup }{2018}]{choi-pact-arxiv02018}
Jungwook Choi, Zhuo Wang, Swagath Venkataramani, Pierce I-Jen Chuang, Vijayalakshmi Srinivasan, and Kailash Gopalakrishnan.
\newblock Pact: Parameterized clipping activation for quantized neural networks.
\newblock {\em arXiv preprint arXiv:1805.06085}, 2018.

\bibitem[\protect\citeauthoryear{Cohen \bgroup \em et al.\egroup }{2019}]{cohen-certified-pmlr-2019}
Jeremy Cohen, Elan Rosenfeld, and Zico Kolter.
\newblock Certified adversarial robustness via randomized smoothing.
\newblock In {\em international conference on machine learning}, pages 1310--1320. PMLR, 2019.

\bibitem[\protect\citeauthoryear{D{\'e}fossez \bgroup \em et al.\egroup }{2021}]{defossez-diffq-arxiv-2021}
Alexandre D{\'e}fossez, Yossi Adi, and Gabriel Synnaeve.
\newblock Differentiable model compression via pseudo quantization noise.
\newblock {\em arXiv preprint arXiv:2104.09987}, 2021.

\bibitem[\protect\citeauthoryear{Deng \bgroup \em et al.\egroup }{2024}]{deng-ausam-arxiv-2024}
Jiaxin Deng, Junbiao Pang, and Baochang Zhang.
\newblock Asymptotic unbiased sample sampling to speed up sharpness-aware minimization, 2024.

\bibitem[\protect\citeauthoryear{Esser \bgroup \em et al.\egroup }{2019}]{esser-lsq-arxiv-2019}
Steven~K Esser, Jeffrey~L McKinstry, Deepika Bablani, Rathinakumar Appuswamy, and Dharmendra~S Modha.
\newblock Learned step size quantization.
\newblock {\em arXiv preprint arXiv:1902.08153}, 2019.

\bibitem[\protect\citeauthoryear{Foret \bgroup \em et al.\egroup }{2020}]{foret-sam-2020-arxiv}
Pierre Foret, Ariel Kleiner, Hossein Mobahi, and Behnam Neyshabur.
\newblock Sharpness-aware minimization for efficiently improving generalization.
\newblock {\em arXiv preprint arXiv:2010.01412}, 2020.

\bibitem[\protect\citeauthoryear{Goodfellow \bgroup \em et al.\egroup }{2014}]{goodfellow-explaining-arxiv-2014}
Ian~J Goodfellow, Jonathon Shlens, and Christian Szegedy.
\newblock Explaining and harnessing adversarial examples.
\newblock {\em arXiv preprint arXiv:1412.6572}, 2014.

\bibitem[\protect\citeauthoryear{Han \bgroup \em et al.\egroup }{2015}]{han2-purn-arxiv-2015}
Song Han, Huizi Mao, and William~J Dally.
\newblock Deep compression: Compressing deep neural networks with pruning, trained quantization and huffman coding.
\newblock {\em arXiv preprint arXiv:1510.00149}, 2015.

\bibitem[\protect\citeauthoryear{He \bgroup \em et al.\egroup }{2016}]{he-resnet-cvpr-2016}
Kaiming He, Xiangyu Zhang, Shaoqing Ren, and Jian Sun.
\newblock Deep residual learning for image recognition.
\newblock In {\em Proceedings of the IEEE conference on computer vision and pattern recognition}, pages 770--778, 2016.

\bibitem[\protect\citeauthoryear{He \bgroup \em et al.\egroup }{2024}]{he-ptqd-nips-2024}
Yefei He, Luping Liu, Jing Liu, Weijia Wu, Hong Zhou, and Bohan Zhuang.
\newblock Ptqd: Accurate post-training quantization for diffusion models.
\newblock {\em Advances in Neural Information Processing Systems}, 36, 2024.

\bibitem[\protect\citeauthoryear{Hinton \bgroup \em et al.\egroup }{2015}]{hinton-kd-arxiv-2015}
Geoffrey Hinton, Oriol Vinyals, and Jeff Dean.
\newblock Distilling the knowledge in a neural network.
\newblock {\em arXiv preprint arXiv:1503.02531}, 2015.

\bibitem[\protect\citeauthoryear{Hochreiter and Schmidhuber}{1994}]{hochreiter-flatmini-1994-nips}
Sepp Hochreiter and J{\"u}rgen Schmidhuber.
\newblock Simplifying neural nets by discovering flat minima.
\newblock {\em Advances in neural information processing systems}, 7, 1994.

\bibitem[\protect\citeauthoryear{Howard \bgroup \em et al.\egroup }{2017}]{howard-mbv1-arxiv-2017}
Andrew~G Howard, Menglong Zhu, Bo~Chen, Dmitry Kalenichenko, Weijun Wang, Tobias Weyand, Marco Andreetto, and Hartwig Adam.
\newblock Mobilenets: Efficient convolutional neural networks for mobile vision applications.
\newblock {\em arXiv preprint arXiv:1704.04861}, 2017.

\bibitem[\protect\citeauthoryear{Huang \bgroup \em et al.\egroup }{2024}]{huang-tfmq-cvpr-2024}
Yushi Huang, Ruihao Gong, Jing Liu, Tianlong Chen, and Xianglong Liu.
\newblock Tfmq-dm: Temporal feature maintenance quantization for diffusion models.
\newblock In {\em Proceedings of the IEEE/CVF Conference on Computer Vision and Pattern Recognition}, pages 7362--7371, 2024.

\bibitem[\protect\citeauthoryear{Keskar \bgroup \em et al.\egroup }{2017}]{keskar-eigenvalue1-iclr-2017}
Nitish~Shirish Keskar, Dheevatsa Mudigere, Jorge Nocedal, Mikhail Smelyanskiy, and Ping Tak~Peter Tang.
\newblock On large-batch training for deep learning: Generalization gap and sharp minima.
\newblock In {\em International Conference on Learning Representations}, 2017.

\bibitem[\protect\citeauthoryear{Kim \bgroup \em et al.\egroup }{2020}]{kim-psg-nips-2020}
Jangho Kim, KiYoon Yoo, and Nojun Kwak.
\newblock Position-based scaled gradient for model quantization and pruning.
\newblock {\em Advances in neural information processing systems}, 33:20415--20426, 2020.

\bibitem[\protect\citeauthoryear{Krizhevsky \bgroup \em et al.\egroup }{2009}]{krizhevsky-cifar-arxiv-2009}
Alex Krizhevsky, Geoffrey Hinton, et~al.
\newblock Learning multiple layers of features from tiny images.
\newblock 2009.

\bibitem[\protect\citeauthoryear{Lecuyer \bgroup \em et al.\egroup }{2019}]{lecuyer-certified-ieee-2019}
Mathias Lecuyer, Vaggelis Atlidakis, Roxana Geambasu, Daniel Hsu, and Suman Jana.
\newblock Certified robustness to adversarial examples with differential privacy.
\newblock In {\em 2019 IEEE symposium on security and privacy (SP)}, pages 656--672. IEEE, 2019.

\bibitem[\protect\citeauthoryear{Lee \bgroup \em et al.\egroup }{2021}]{lee-ewgs-cvpr-2021}
Junghyup Lee, Dohyung Kim, and Bumsub Ham.
\newblock Network quantization with element-wise gradient scaling.
\newblock In {\em Proceedings of the IEEE/CVF conference on computer vision and pattern recognition}, pages 6448--6457, 2021.

\bibitem[\protect\citeauthoryear{Li \bgroup \em et al.\egroup }{2021a}]{li-brecq-arxiv-2021}
Yuhang Li, Ruihao Gong, Xu~Tan, Yang Yang, Peng Hu, Qi~Zhang, Fengwei Yu, Wei Wang, and Shi Gu.
\newblock Brecq: Pushing the limit of post-training quantization by block reconstruction.
\newblock {\em arXiv preprint arXiv:2102.05426}, 2021.

\bibitem[\protect\citeauthoryear{Li \bgroup \em et al.\egroup }{2021b}]{li-mqbench-arxiv-2021}
Yuhang Li, Mingzhu Shen, Jian Ma, Yan Ren, Mingxin Zhao, Qi~Zhang, Ruihao Gong, Fengwei Yu, and Junjie Yan.
\newblock Mqbench: Towards reproducible and deployable model quantization benchmark.
\newblock {\em arXiv preprint arXiv:2111.03759}, 2021.

\bibitem[\protect\citeauthoryear{Li \bgroup \em et al.\egroup }{2023a}]{li-qdiffusion-cvpr-2023}
Xiuyu Li, Yijiang Liu, Long Lian, Huanrui Yang, Zhen Dong, Daniel Kang, Shanghang Zhang, and Kurt Keutzer.
\newblock Q-diffusion: Quantizing diffusion models.
\newblock In {\em Proceedings of the IEEE/CVF International Conference on Computer Vision}, pages 17535--17545, 2023.

\bibitem[\protect\citeauthoryear{Li \bgroup \em et al.\egroup }{2023b}]{li-ptqvit-tnnls-2023}
Zhikai Li, Mengjuan Chen, Junrui Xiao, and Qingyi Gu.
\newblock Psaq-vit v2: Toward accurate and general data-free quantization for vision transformers.
\newblock {\em IEEE Transactions on Neural Networks and Learning Systems}, pages 1--12, 2023.

\bibitem[\protect\citeauthoryear{Lin \bgroup \em et al.\egroup }{2019}]{lin-dq-arxiv-2019}
Ji~Lin, Chuang Gan, and Song Han.
\newblock Defensive quantization: When efficiency meets robustness.
\newblock {\em arXiv preprint arXiv:1904.08444}, 2019.

\bibitem[\protect\citeauthoryear{Madry \bgroup \em et al.\egroup }{2017}]{mkadry-towards-iclr-2017}
Aleksander Madry, Aleksandar Makelov, Ludwig Schmidt, Dimitris Tsipras, and Adrian Vladu.
\newblock Towards deep learning models resistant to adversarial attacks.
\newblock {\em stat}, 1050(9), 2017.

\bibitem[\protect\citeauthoryear{Nagel \bgroup \em et al.\egroup }{2019a}]{nagel-datafreequantization-cvpr-2019}
Markus Nagel, Mart~van Baalen, Tijmen Blankevoort, and Max Welling.
\newblock Data-free quantization through weight equalization and bias correction.
\newblock In {\em Proceedings of the IEEE/CVF International Conference on Computer Vision}, pages 1325--1334, 2019.

\bibitem[\protect\citeauthoryear{Nagel \bgroup \em et al.\egroup }{2019b}]{nagel-dfq-iccv-2019}
Markus Nagel, Mart~van Baalen, Tijmen Blankevoort, and Max Welling.
\newblock Data-free quantization through weight equalization and bias correction.
\newblock In {\em Proceedings of the IEEE/CVF International Conference on Computer Vision}, pages 1325--1334, 2019.

\bibitem[\protect\citeauthoryear{Nagel \bgroup \em et al.\egroup }{2020}]{nagel-adaround-icml-2020}
Markus Nagel, Rana~Ali Amjad, Mart Van~Baalen, Christos Louizos, and Tijmen Blankevoort.
\newblock Up or down? adaptive rounding for post-training quantization.
\newblock In {\em International Conference on Machine Learning}, pages 7197--7206. PMLR, 2020.

\bibitem[\protect\citeauthoryear{Nagel \bgroup \em et al.\egroup }{2022}]{nagel-ooq-icml-2022}
Markus Nagel, Marios Fournarakis, Yelysei Bondarenko, and Tijmen Blankevoort.
\newblock Overcoming oscillations in quantization-aware training.
\newblock In {\em International Conference on Machine Learning}, pages 16318--16330. PMLR, 2022.

\bibitem[\protect\citeauthoryear{Pang \bgroup \em et al.\egroup }{2025}]{pang-qatcr-arxiv-2024}
Junbiao Pang, Tianyang Cai, Baochang Zhang, and Jiaqi Wu.
\newblock In-distribution consistency regularization improves the generalization of quantization-aware training, 2025.

\bibitem[\protect\citeauthoryear{Paszke \bgroup \em et al.\egroup }{2019}]{paszke-pytorch-nips-2019}
Adam Paszke, Sam Gross, Francisco Massa, Adam Lerer, James Bradbury, Gregory Chanan, Trevor Killeen, Zeming Lin, Natalia Gimelshein, Luca Antiga, et~al.
\newblock Pytorch: An imperative style, high-performance deep learning library.
\newblock {\em Advances in neural information processing systems}, 32, 2019.

\bibitem[\protect\citeauthoryear{Qian and Wegman}{2018}]{qian-L2-arxiv-2018}
Haifeng Qian and Mark~N Wegman.
\newblock L2-nonexpansive neural networks.
\newblock {\em arXiv preprint arXiv:1802.07896}, 2018.

\bibitem[\protect\citeauthoryear{Sandler \bgroup \em et al.\egroup }{2018}]{sandler-mbv2-cvpr-2018}
Mark Sandler, Andrew Howard, Menglong Zhu, Andrey Zhmoginov, and Liang-Chieh Chen.
\newblock Mobilenetv2: Inverted residuals and linear bottlenecks.
\newblock In {\em Proceedings of the IEEE conference on computer vision and pattern recognition}, pages 4510--4520, 2018.

\bibitem[\protect\citeauthoryear{Shin \bgroup \em et al.\egroup }{2023}]{shin-nipq-cvpr-2023}
Juncheol Shin, Junhyuk So, Sein Park, Seungyeop Kang, Sungjoo Yoo, and Eunhyeok Park.
\newblock Nipq: Noise proxy-based integrated pseudo-quantization.
\newblock In {\em Proceedings of the IEEE/CVF Conference on Computer Vision and Pattern Recognition}, pages 3852--3861, 2023.

\bibitem[\protect\citeauthoryear{So \bgroup \em et al.\egroup }{2024}]{so-temporal-nips-2024}
Junhyuk So, Jungwon Lee, Daehyun Ahn, Hyungjun Kim, and Eunhyeok Park.
\newblock Temporal dynamic quantization for diffusion models.
\newblock {\em Advances in Neural Information Processing Systems}, 36, 2024.

\bibitem[\protect\citeauthoryear{Sui \bgroup \em et al.\egroup }{2024}]{sui-itsfusion-arxiv-2024b}
Yang Sui, Yanyu Li, Anil Kag, Yerlan Idelbayev, Junli Cao, Ju~Hu, Dhritiman Sagar, Bo~Yuan, Sergey Tulyakov, and Jian Ren.
\newblock Bitsfusion: 1.99 bits weight quantization of diffusion model.
\newblock {\em arXiv preprint arXiv:2406.04333}, 2024.

\bibitem[\protect\citeauthoryear{Szegedy}{2013}]{szegedy-intriguing-arxiv-2013}
C~Szegedy.
\newblock Intriguing properties of neural networks.
\newblock {\em arXiv preprint arXiv:1312.6199}, 2013.

\bibitem[\protect\citeauthoryear{Wei \bgroup \em et al.\egroup }{2022}]{wei-qdrop-arxiv-2022}
Xiuying Wei, Ruihao Gong, Yuhang Li, Xianglong Liu, and Fengwei Yu.
\newblock Qdrop: Randomly dropping quantization for extremely low-bit post-training quantization.
\newblock {\em arXiv preprint arXiv:2203.05740}, 2022.

\bibitem[\protect\citeauthoryear{Wen \bgroup \em et al.\egroup }{2020}]{Wen-eigenvalue2-icais-2020}
Yeming Wen, Kevin Luk, Maxime Gazeau, Guodong Zhang, Harris Chan, and Jimmy Ba.
\newblock An empirical study of stochastic gradient descent with structured covariance noise.
\newblock In {\em International Conference on Artificial Intelligence and Statistics}, 2020.

\bibitem[\protect\citeauthoryear{Xie \bgroup \em et al.\egroup }{2020}]{xie-adversarial-cvpr-2020}
Cihang Xie, Mingxing Tan, Boqing Gong, Jiang Wang, Alan~L Yuille, and Quoc~V Le.
\newblock Adversarial examples improve image recognition.
\newblock In {\em Proceedings of the IEEE/CVF conference on computer vision and pattern recognition}, pages 819--828, 2020.

\bibitem[\protect\citeauthoryear{Xu \bgroup \em et al.\egroup }{2023}]{xu-rebnn-aiii-2023}
Sheng Xu, Yanjing Li, Teli Ma, Mingbao Lin, Hao Dong, Baochang Zhang, Peng Gao, and Jinhu Lu.
\newblock Resilient binary neural network.
\newblock In {\em Proceedings of the AAAI Conference on Artificial Intelligence}, volume~37, pages 10620--10628, 2023.

\bibitem[\protect\citeauthoryear{Zhang \bgroup \em et al.\egroup }{2021}]{zhang-1understanding-acm-2021}
Chiyuan Zhang, Samy Bengio, Moritz Hardt, Benjamin Recht, and Oriol Vinyals.
\newblock Understanding deep learning (still) requires rethinking generalization.
\newblock {\em Communications of the ACM}, 64(3):107--115, 2021.

\bibitem[\protect\citeauthoryear{Zhu \bgroup \em et al.\egroup }{2019}]{zhu-benn-cvpr-2019}
Shilin Zhu, Xin Dong, and Hao Su.
\newblock Binary ensemble neural network: More bits per network or more networks per bit?
\newblock In {\em Proceedings of the IEEE/CVF conference on computer vision and pattern recognition}, pages 4923--4932, 2019.

\bibitem[\protect\citeauthoryear{Zoph and Le}{2016}]{zoph-nas-arxiv-2016}
Barret Zoph and Quoc~V Le.
\newblock Neural architecture search with reinforcement learning.
\newblock {\em arXiv preprint arXiv:1611.01578}, 2016.

\end{thebibliography}

\end{document}